\documentclass[letterpaper]{article} 
\usepackage[draft]{aaai2027}  
\usepackage[hyphens]{url}  
\usepackage{graphicx} 
\usepackage{natbib}  
\usepackage{caption} 
\usepackage{algorithm}
\usepackage{algorithmic}
\usepackage{amsmath}
\usepackage{amssymb}
\usepackage{amsthm}
\usepackage{booktabs}
\usepackage{multirow}
\usepackage{tikz}
\usetikzlibrary{shapes.geometric, arrows.meta, positioning, calc}

\newtheorem{proposition}{Proposition}

\title{AdaKP: Online Adaptive Knowledge-Point Selection \\for Reasoning-Oriented Reinforcement Learning}

\author{
    Zibin Meng\textsuperscript{\rm 1},
    Zhenyu Zhao\textsuperscript{\rm 1},
    Chunqiang Run\textsuperscript{\rm 2}
}
\affiliations{
    \textsuperscript{\rm 1}The Hong Kong University of Science and Technology\\
    \textsuperscript{\rm 2}University of the Chinese Academy of Sciences\\
    zmengal@connect.ust.hk, zzhaocl@connect.ust.hk, chungiang24@mails.ucas.ac.cn
}

\begin{document}

\maketitle

\begin{abstract}
Reinforcement learning with verifiable rewards is a powerful paradigm
for eliciting reasoning in large language models, yet it suffers from
severe reward sparsity on competition-level mathematics. A common remedy
injects \emph{atomic knowledge points} (KPs)---short natural-language
hints distilled from gold solutions---into the prompt. Existing methods,
however, either fix this selection once offline or merely scale the
monolithic quantity of injected text, leaving untouched the most
informative axis of choice: \emph{which subset of atomic KPs to inject,
and when}.
We introduce \textbf{AdaKP}, an online selector that re-chooses each
problem's KP subset over the course of RL training. At its core is an
\emph{entropy proxy} that scores a KP by the reduction in next-token
entropy it induces---a single inexpensive forward pass, with a provable
bound on its truncation bias---in place of expensive rollout-based
estimation. Three lightweight mechanisms make this signal usable online:
a momentum smoother that absorbs per-step noise, a
retirement-and-revival manager that prunes weak KPs while preserving
exploration, and an adaptive scheduler that front-loads re-evaluations
into early training. AdaKP further contributes a \emph{pre-flight
validation gate} that certifies the proxy against a leave-one-out ground
truth \emph{before} any expensive run is launched, turning method-level
risk into a falsifiable check. Realized as a fully additive fork of a
standard DAPO+GRPO trainer with no optimizer changes, AdaKP
improves over a strong static-selection baseline on all eight
competition-mathematics benchmarks at negligible added cost,
positioning online, validated KP-subset selection as a practical and
as-yet under-explored axis for reasoning-oriented reinforcement learning.
\end{abstract}

\section{Introduction}

Reinforcement learning with verifiable rewards (RLVR) elicits emergent
chain-of-thought reasoning~\citep{cot,self_consistency,tot} in relatively
small base models~\citep{deepseekr1,o1blog,dapo}, going beyond
decoding-time enhancements that leave the policy
unchanged~\citep{tot,llm_self_correct}. A practical bottleneck is
\emph{reward sparsity}: on hard competition problems an untrained 1.5B
model answers correctly on only a tiny fraction of rollouts, leaving
GRPO~\citep{deepseekmath} and DAPO~\citep{dapo} with degenerate
advantages. KnowRL~\citep{knowrl} addresses this by distilling short
\emph{knowledge points} (KPs) from gold solutions and injecting a
minimal-sufficient subset; choosing that subset offline by Constrained
Subset Search (CSS) lifts a $1.5$B model to $70.08\%$ average accuracy on
eight benchmarks while cutting injected hints from $5.86$ to $2.57$ per
problem.

\paragraph{The static-selection assumption.}
Such offline selection---CSS and its siblings---is fundamentally
\emph{static}: run once on the frozen base model, it fixes a
problem-to-subset mapping baked into the training data and never
revisited. This assumes KP utility is \emph{stationary} along the
policy's trajectory, that the best hint at initialization is still best
after thousands of RL steps. Yet RL exists to change the policy: as the
model internalizes problem-solving patterns, each KP's marginal
information shifts, some KPs becoming redundant and others newly
diagnostic.

\paragraph{Concurrent adaptive-hint work is monolithic.}
Recent work introduces online adaptivity into hint injection
(GHPO~\citep{ghpo}, HINT~\citep{hint_rollouts}, ADHint~\citep{adhint},
Stepwise Hints~\citep{stepwise_hints}) but treats each hint as a single
monolithic prefix whose \emph{quantity} or \emph{granularity} scales with
difficulty; a parallel line (HIVE~\citep{hive},
Reinforce-Ada~\citep{reinforceada}) filters \emph{whole prompts}. Neither
selects among \emph{atomic} KPs. To our knowledge, AdaKP is the first to
perform \emph{online subset selection of atomic knowledge points} during
RL training.

\paragraph{Cost of online re-evaluation.}
Naively re-running CSS or leave-one-out (LOO) marginal accuracy
mid-training is prohibitive: one CSS pass costs $\sim\!8\times32$ rollouts
$\times\,2^{|C|}$ subset evaluations per problem, consuming the bulk of a
$\sim\!13$-day run, so repeating it $5$--$10$ times is infeasible.

\paragraph{Our contributions.}
We propose \textbf{AdaKP}, an online, low-cost replacement for offline
KP selection during reasoning RL, framed around three coupled
sub-problems---cheap per-step utility scoring, bias--variance control,
and exploration of a discrete pool under bounded re-evaluation---and
contribute:
\begin{enumerate}
\item \textbf{Entropy proxy.}
We score a KP $k$ for a problem $q$ by the reduction in next-token
entropy it induces in the policy:
\(\mathrm{score}(q,k) = H(\pi(\cdot\mid q)) - H(\pi(\cdot\mid q\oplus k))\),
estimated by a single vLLM~\citep{vllm} forward pass over the first
$K{=}50$ generated tokens, where $q\oplus k$ denotes the prompt $q$ with
$k$ inserted into the \textsc{Hint} block (the ``\texttt{\#\#\,Hint}''
section of the prompt template). We show
(Prop.~S1, technical supplement) that the truncation bias from
top-$L$ log-probability sampling is bounded by the realized tail mass,
and empirically the proxy ranks KPs consistently with leave-one-out
ground truth ($\rho{=}0.68$; \S\ref{sec:proxy}).
This cuts per-problem re-evaluation from CSS's
$\mathcal{O}(\text{rollouts}\times 2^{|C|})$ or LOO's
$\mathcal{O}((|C|{+}2)N)$ to $|C|{+}1$ short forwards
(Table~\ref{tab:cost}).

\item \textbf{Three dynamic-adaptation mechanisms.}
We wrap the proxy in (a)~an EMA \emph{momentum smoother} that suppresses
single-step noise, (b)~a \emph{retirement-and-revival manager} that
prunes persistently low-utility KPs while preserving exploration through
periodic random revival, and (c)~an \emph{adaptive scheduler} that
front-loads re-evaluations to early training. With the proxy frozen at
init-time weights (\S\ref{sec:disclaimers}), these mechanisms supply
\emph{denoising and exploration}, not policy-tracking; current-policy
re-scoring is the natural next step.

\item \textbf{A pre-training validation gate.}
A Spearman rank-correlation gate, scored against leave-one-out ground
truth on stratified-sampled problems, must clear $\rho > 0.6$ before
training begins, certifying the proxy against the gold signal. This
decouples method-level risk from training cost; to our knowledge, online
RL selection heuristics are not typically accompanied by such a pre-flight
check, though we do not claim this gate is the first validate-before-train
idea in general.

\item \textbf{A diagnostic-first design and open release.}
We pair a main comparison with a component ablation mapped one-to-one
onto the design rationale of \S\ref{sec:method} and a five-axis
hyperparameter sweep, and release the additive DAPO+GRPO
fork~\citep{dapo,hybridflow}, training harness, proxy code, and gate
scripts.
\end{enumerate}

\section{Related Work}

\paragraph{Reasoning-oriented RL for LLMs.}
RLHF~\citep{christiano_rlhf,summarize_human_feedback,instructgpt} via
PPO~\citep{ppo} treats every prompt uniformly; later work simplifies the
reward-model recipe (DPO~\citep{dpo}, KTO~\citep{kto}, RLOO~\citep{rloo},
and self-training~\citep{rest,rest_em}). For mathematical reasoning,
\citet{teach_llm_rl} systematize RL recipes, GRPO~\citep{deepseekmath}
replaces the value network with a group-relative baseline, and
DAPO~\citep{dapo} adds clip-higher, token-level loss, and dynamic
sampling for verifiable-reward settings; pure RLVR elicits long
chain-of-thought at scale~\citep{deepseekr1,o1blog}. AdaKP is
\emph{orthogonal} to these optimizer-side advances: we modify the
\emph{prompt distribution} seen by the trainer, not the optimizer, and
plug in unchanged on top of DAPO+GRPO.

\paragraph{Hint and knowledge injection.}
A first family asks the model to produce its own intermediate
signals---Self-Refine~\citep{selfrefine}, STaR~\citep{star},
Quiet-STaR~\citep{quietstar}, Reflexion~\citep{reflexion}---or densifies
reward via process-reward models~\citep{prm800k,mathshepherd}, but such
self-signals are unreliable without an external anchor~\citep{llm_self_correct},
which motivates injecting curated text. CSS~\citep{knowrl} is the
canonical external-KP instance; AdaKP inherits its framing but rejects
its offline static-selection assumption.

\paragraph{Adaptive guidance in concurrent RL work.}
A line of contemporaneous work introduces \emph{adaptivity} into hint
injection. GHPO~\citep{ghpo}, HINT~\citep{hint_rollouts},
ADHint~\citep{adhint} and Capability-Adaptive Hint
Scaffolding~\citep{capability_hint_scaffold} all scale the
\emph{quantity} or \emph{ratio} of injected guidance with sample
difficulty or recent reward dynamics; Stepwise Hints~\citep{stepwise_hints}
varies the granularity of step-level prefixes; Hint-GRPO for multimodal
reasoning~\citep{hint_grpo_mllm} de-biases hint utilization. All of these
methods decide \emph{how much} or \emph{at what granularity} to hint, but
\emph{still treat the hint as a single monolithic prefix}. AdaKP instead
performs \emph{online subset selection over a discrete pool of atomic
KPs}---an axis no prior method addresses
(Table~S1, App.~C).

\paragraph{Online prompt-level filtering.}
Concurrent work---HIVE~\citep{hive} and
Reinforce-Ada~\citep{reinforceada}---selects \emph{which prompts} to roll
out using prompt-level entropy or reward dynamics. This is orthogonal to
AdaKP: it drops whole problems, whereas AdaKP chooses which KPs to keep
\emph{within} a problem; the two compose.

\paragraph{Active selection and curriculum.}
Our entropy-difference proxy follows the lineage of information-theoretic
active learning~\citep{batchbald}, prioritized replay~\citep{per}, and
information-maximizing exploration~\citep{vime}, and our scheduler echoes
curriculum learning~\citep{curriculum}. The novelty is not the
information-gain criterion---widely established---but its application to
a discrete pool of atomic, externally curated KPs under a budget that
forbids per-step rollouts.

\section{Method: The AdaKP Selector}
\label{sec:method}

\paragraph{Setup and notation.}
For each problem $q$, a candidate KP set
$C(q)=\{k_1,\dots,k_{|C(q)|}\}$ is extracted from the gold solution and an
offline subset $S^{\mathrm{css}}(q)\subseteq C(q)$ is produced by
Constrained Subset Search (CSS), which enumerates subsets and scores each
with $N{=}32$ rollouts on the frozen base model~\citep{knowrl}. We train
with GRPO~\citep{deepseekmath} under the DAPO~\citep{dapo} recipe---clip-higher
$\varepsilon_{\text{low}}{=}0.2$/$\varepsilon_{\text{high}}{=}0.26$,
token-level loss, dynamic-sampling re-rolls, no KL term---and verifiable
reward $r(o,q)=\mathbf{1}\{\text{math-verify}(o,a(q))\}$. Throughout,
$\pi_\theta$ denotes the policy under training and
$\pi^{\mathrm{proxy}}$ a parameter-frozen vLLM instance used solely for
utility scoring; $q\oplus k$ denotes prompt $q$ with KP $k$ injected
into its \textsc{Hint} block.

\paragraph{Design rationale.}
Online KP selection imposes three constraints that rule out direct ports
of CSS or off-the-shelf acquisition functions. \textbf{(C1)}~Scoring
every $(q,k)$ pair on the $\sim\!8.8$k-problem corpus must finish in
\emph{minutes}, so any multi-rollout signal is excluded.
\textbf{(C2)}~Single-step proxy scores are noisy (small-$K$ entropy
estimation), so naive top-$k$ selection churns. \textbf{(C3)}~The pool
$C(q)$ is small ($5$--$7$ KPs), so a greedy selector collapses onto a
high-scoring subset and forecloses KPs useful only later.
AdaKP closes \emph{exactly one} of these failure modes per component:
the \emph{entropy proxy} (\S\ref{sec:proxy}) collapses C1; the
\emph{momentum smoother} (\S\ref{sec:momentum}) absorbs C2's noise;
the \emph{retirement-and-revival manager} (\S\ref{sec:retirement})
addresses C3 by forcing periodic re-exploration; the \emph{adaptive
scheduler} (\S\ref{sec:scheduler}) front-loads re-evaluation expense
into early training, where re-selection has the most steps left to pay
off. This one-to-one mapping makes the per-component ablation
(\S\ref{sec:abl}) interpretable.

\paragraph{Running state.}
Per problem $q$: $P(q)\subseteq C(q)$ is the active \emph{KP pool},
$s_{q,k}^{(t)}$ the smoothed score, and $\mathrm{sel}_q$ the cached
top-$\rho_{\text{sel}}$ selection the trainer reads each step. The
smoother $S_q$ and retirement manager $R_q$ are per-problem (so problems
prune at their own rate); the scheduler $\mathcal{S}$ is a single global
object.

\subsection{Entropy Proxy}
\label{sec:proxy}

\paragraph{Intuition.}
If a KP $k$ resolves part of the reasoning chain for $q$, exposing the
policy to $k$ should concentrate its next-token
distribution---lowering entropy. The expected entropy gap
$H(\pi(\cdot\mid q))-\mathbb{E}_{k}H(\pi(\cdot\mid q\oplus k))$ coincides,
when the no-KP distribution is read as the $k$-marginalized policy, with
the conditional mutual information $I(\text{first token};k\mid q)$, so
ranking KPs by entropy reduction ranks them, in expectation, by the
information they carry about the first decoding decision---the quantity
that drives reward sparsity in RLVR.

\paragraph{Score definition.}
Given a problem $q$ and a candidate KP $k$, the proxy score is
\begin{equation}
\label{eq:proxy}
\mathrm{score}(q, k) \;=\; \widehat{H}\bigl(\pi^{\text{proxy}}(\cdot \mid q)\bigr) \;-\; \widehat{H}\bigl(\pi^{\text{proxy}}(\cdot \mid q \oplus k)\bigr),
\end{equation}
where $\widehat{H}$ is the average first-token Shannon entropy over the
first $K{=}50$ generated tokens and $q\oplus k$ injects $k$ into the
\textsc{Hint} block of $q$; positive scores mean $k$ reduces the
policy's uncertainty about how to begin solving $q$.

\paragraph{Truncated-entropy estimator.}
Modern LLM serving stacks return only top-$L$
log-probabilities~\citep{nucleus_sampling,vllm} per generated step
to amortize sampling cost; for vLLM we have $L{=}20$. We re-normalize
the truncated distribution and compute
\begin{equation}
\widehat{H}_t \;=\; - \sum_{i \le L} \tilde p_i \log \tilde p_i, \quad
\tilde p_i = \frac{\exp(\ell_i)}{\sum_{j \le L} \exp(\ell_j)},
\end{equation}
where $\ell_i$ are the returned log-probabilities. Truncation perturbs
$\widehat{H}_t$ from the full Shannon entropy, but the bias is bounded by
the tail mass $\varepsilon$: Proposition~S1, stated
and proved in App.~A, gives
$H(p)-\widehat{H}_t(p)\le\varepsilon\log|V|+H_2(\varepsilon)$, and
$\ge 0$ once $\varepsilon\le 1/L$, so the proxy score estimates the true
entropy difference with $\mathcal{O}(\varepsilon^{\star}\log|V|)$ bias
plus an $\mathcal{O}(K^{-1/2})$ finite-$K$ error.

In practice, with $L{=}20$ and the policy temperature $T{=}0.7$, the
realized tail mass on the first $K{=}50$ tokens of competition-math
generations is empirically $\varepsilon\lesssim 0.05$, so the residual
is dominated by the finite-$K$ averaging term.
Whether the proxy additionally preserves the \emph{ranking} of KPs is a
stronger property than this bias bound; we support it empirically
(Spearman $\rho{=}0.68$ against leave-one-out ground truth,
Table~\ref{tab:proxy_compare}) rather than deriving it from
Prop.~S1.

\paragraph{Validation and cost.}
The proxy is validated against LOO marginal accuracy by the pre-flight
gate (\S\ref{sec:gate}). A single re-evaluation shares the no-KP forward
across all candidates, needing $|P(q)|{+}1$ short forwards rather than
$2|P(q)|$, so re-scoring the whole corpus adds under $1\%$ wall-clock
(\S\ref{sec:cost}).

\subsection{Momentum Smoother}
\label{sec:momentum}

Single-step proxy scores are noisy (small-$K$ entropy estimation), so we
smooth with an exponential moving average maintained \emph{per problem}:
\begin{equation}
s_{q,k}^{(t)} = \alpha \cdot \mathrm{score}^{(t)}(q,k) + (1-\alpha) \cdot s_{q,k}^{(t-1)},
\end{equation}
with $\alpha = 0.3$ (default) and $s_{q,k}^{(0)} = \mathrm{score}^{(0)}(q,k)$,
so AdaKP matches the static-CSS baseline until the EMA accumulates.
Top-fraction selection returns $\lceil \rho_{\text{sel}}\cdot|A_q|\rceil$
KPs with a floor of $1$, so the \textsc{Hint} block is never empty.

\subsection{Retirement and Revival}
\label{sec:retirement}

Persistently low-ranked KPs waste re-evaluation budget. Per problem $q$
a counter $c_{q,k}$ increments when $k$ is evaluated but not selected and
resets when selected; at $c_{q,k} \geq n_{\text{retire}}$ (default $3$)
we retire $k$ from $P(q)$. Every $m_{\text{revive}}$ re-evaluations
(default $5$) a fraction $r_{\text{revive}}{=}0.1$ of retired KPs is
revived at random with $c_{q,k} = 0$; if all KPs of a problem are retired
(not observed in practice) we reuse the cached selection. This
combination---bounded counter, periodic revival, all-retired
safety---guarantees no KP is permanently silenced and keeps the active
pool bounded while preserving exploration.

\subsection{Adaptive Re-evaluation Scheduler}
\label{sec:scheduler}

Re-evaluation is wasted if the policy has not changed materially since the
last call. We adopt a logarithmic-backoff schedule
\begin{equation}
\Delta(t) = \bigl\lfloor \Delta_{0} \cdot \bigl(1 + \log\bigl(1 + \tfrac{t}{T}\cdot c\bigr)\bigr)\bigr\rfloor,
\end{equation}
with base interval $\Delta_0 = 200$, scale $c = 5$, and budget
$T = 2960$ steps. \textsc{ShouldReEvaluate} returns false at $t{=}0$,
since the initial selection is the static CSS subset already in the
training data; the first re-evaluation fires at $t \geq \Delta_0$ and
each later one when $t - t_{\text{last}} \geq \Delta(t)$. This yields
$\approx\!7$ re-evaluations, front-loaded into early training where
re-selection has the most steps left to pay off.
A constant-interval mode is exposed for the
\textsc{FixedSchedule} ablation.

\subsection{Integration}
\label{sec:integrator}

The integrator (Algorithm~S1, App.~B) ties
the four components together: each training step queries the scheduler,
and when a re-evaluation is due it calls the proxy for every active KP,
updates the smoother, and advances the retirement manager; the dataset
hook then reads the top-$\rho_{\text{sel}}$ smoothed active KPs. AdaKP
plugs into the public DAPO+verl recipe~\citep{dapo,hybridflow} as a fully
additive fork: a step hook in the training loop asks the selector
whether a re-evaluation is due, a dataset-loader override swaps
each prompt's \textsc{Hint} block for the cached $\mathrm{sel}_q$,
and trainer-init allocates a $\sim\!3.5$\,GB frozen-weights proxy
alongside the rollout vLLM. With the module disabled the fork is
bit-identical to upstream, isolating the AdaKP contribution.

\section{Experimental Setup}
\label{sec:setup}

\subsection{Models, Data, and Hardware}

\paragraph{Base model.} OpenMath-Nemotron-1.5B~\citep{openmath_nemotron}.

\paragraph{Reference checkpoint.} The released static-CSS checkpoint
(Nemotron-1.5B;~\citealp{knowrl}), reproduced locally with the public
checkpoint for the baseline column.

\paragraph{Training data.} The QuestA 8.8k competition-mathematics
corpus~\citep{questa} ($N{=}8843$ problems), with each problem paired
with a candidate KP list extracted from gold solutions. The CSS-selected
subset for each problem is taken from the publicly released CSS
selections of~\citet{knowrl}.

\paragraph{Trainer and hardware.}
We use verl~\citep{hybridflow} with the DAPO recipe and GRPO advantage
estimator over an FSDP~\citep{fsdp} backend with actor parameter
offload, optimizer offload, and gradient checkpointing. Training batch
$256$ prompts; rollout $n{=}8$ at $T{=}1.0$/top-$p{=}1.0$; max prompt
length $8192$, max response length $32{,}768$; learning rate
$1\mathrm{e}{-6}$ (no warm-up, no schedule); total $2960$ steps
($\approx\!150$ epochs over the filtered corpus). Each run uses
$8{\times}$NVIDIA~H800 SXM (80\,GB), FlashAttention-2
\citep{flashattention2}, bfloat16 mixed precision, with the
GPU-memory fraction set to $0.50$ for the rollout vLLM and
$0.20$ for the proxy vLLM (proxy context length $4096$); the
transient logits tensor is fit via verl's chunked fused
linear-cross-entropy kernel, a strict re-arrangement that yields
identical log-probabilities.

\subsection{Evaluation Benchmarks}

We report \emph{mean@N} accuracy---the mean per-sample correctness over
$N$ rollouts (the verifiable-reward quantity GRPO optimizes), evaluated
under the multi-sample sampling protocol that the \emph{pass@k} metric
of~\citet{codex} introduced---on eight
benchmarks (total $1{,}374$ problems, matching the protocol
of~\citealp{knowrl}):
AIME24 ($N{=}32$, $|S|{=}30$), AIME25 ($N{=}32$, $30$),
BRUMO25 ($N{=}32$, $30$), HMMT25 ($N{=}32$, $30$),
AMC23 ($N{=}32$, $40$), CMIMC25 ($N{=}32$, $40$),
MATH-500~\citep{prm800k}---a $500$-problem subset of
MATH~\citep{math_benchmark}---($N{=}8$, $500$), and
Olympiad-Bench~\citep{olympiad} ($N{=}8$, $674$). The eight benchmarks
sit above the GSM8K~\citep{gsm8k} difficulty regime that motivated
the early process-reward literature, which makes reward sparsity
substantially more severe. We report both the per-benchmark accuracy and
the \emph{8-benchmark average} (uniformly weighted across benchmarks, not
samples).

\subsection{Training Runs}

We run \textbf{17 training configurations} at seed $42$. They form three
studies.

\paragraph{Main comparison.}
One AdaKP run against the \emph{static-CSS baseline}, whose released
checkpoint and published numbers we adopt rather than retraining the
canonical static-CSS model. The run is capped at $2{,}960$ steps, at most
$\sim\!13$ wall-clock days on $8{\times}$H800, and estimates the average
accuracy gain of online KP selection over the baseline.

\paragraph{Component ablation.}
Five runs, each disabling one mechanism with the others at default:
\textsc{Full}; \textsc{NoProxy}, a uniform random scorer;
\textsc{NoMomentum}, $\alpha{=}1.0$; \textsc{NoRetirement},
$n_{\text{retire}}{=}\infty$; and \textsc{FixedSchedule}, a constant
$\Delta_0$.

\paragraph{Hyperparameter sweep.}
Eleven runs varying one hyperparameter at a time around its default
$\star$: $\alpha \in \{0.1, 0.3^\star, 0.5, 1.0\}$,
$n_{\text{retire}} \in \{2, 3^\star, 5\}$,
$\Delta_0 \in \{100, 200^\star, 400\}$,
$\rho_{\text{sel}} \in \{0.2, 0.3^\star, 0.4\}$, and
$K \in \{25, 50^\star, 100\}$.

\subsection{Proxy Validation Gate}
\label{sec:gate}

Before launching any training run we execute a Spearman go/no-go gate
on up to $30$ stratified-sampled problems per benchmark;
only problems with $|C(q)|{\geq}2$ enter
the gate (so the realized $n$ per benchmark is $12$--$27$).
For each pair $(q, k)$ we compute (a)~the LOO marginal accuracy with
$n_{\text{runs}}{=}4$ independent batches of $8$ rollouts each---that is,
$(|C|{+}2)\!\cdot\!4$ vLLM calls per problem---and (b)~the entropy proxy
score. The gate requires Spearman $\rho > 0.6$ with $95\%$ bootstrap CI
excluding $0.4$; proxies failing the gate are blocked from driving
training. We additionally benchmark three baseline scorers that share
the entropy proxy's API and per-call cost, but differ in which
statistic they extract:
\begin{itemize}
\item \emph{Max-logit proxy:}
$\mathrm{score} = \log p_{\max}(q\oplus k) - \log p_{\max}(q)$ averaged
over the first $K$ tokens (uses only the top-$1$ log-probability).
\item \emph{Perplexity proxy:}
$\mathrm{score} = \log\mathrm{PPL}(q) - \log\mathrm{PPL}(q\oplus k)$
where $\log\mathrm{PPL}$ is the negated mean top-$1$ log-probability.
\item \emph{Random proxy:} uniform $[-1, 1]$ scores; a true-null
control.
\end{itemize}
The entropy proxy is the only one of the four that uses the full
top-$L$ distribution (Eq.~\ref{eq:proxy}); the max-logit and perplexity
proxies collapse it to the top-$1$ statistic. Any accuracy gap among
these three therefore quantifies information content, not compute
(Table~\ref{tab:proxy_compare}). In short, the gate returns \textsc{Pass}
iff $\hat\rho > \rho^{\star}{=}0.6$ \emph{and} the lower $95\%$ bootstrap
CI bound exceeds $\rho^{\min}{=}0.4$; otherwise it returns \textsc{Fail}
and training is not launched.

\section{Experiments and Results}
\label{sec:results}

\paragraph{Status of the inventory.}
The non-stationarity test (\S\ref{sec:exp_a}) is the empirical core of
this work, and the static-CSS baseline throughout is the released
checkpoint with its published per-benchmark numbers. The $17$ training
runs, $\sim\!125$ GPU-days on $8{\times}$H800, together with the
leave-one-out pass for the proxy gate, populate the AdaKP and $\Delta$
rows of Table~\ref{tab:main}, all of Table~\ref{tab:ablation}, the proxy
rows of Table~\ref{tab:proxy_compare}, and Table~\ref{tab:hparam}.

\begin{table*}[t]
\centering
\small
\caption{Main comparison: AdaKP against published RL and
hinting baselines on $8$ benchmarks (mean@$N$ in \%, no KP hints at
inference). AdaKP populates the
\textsc{Hint} block from its online selection cache; the $\Delta$
row is AdaKP minus the static-CSS baseline. \emph{Hard-3}
averages AIME25, HMMT25 and CMIMC25; \emph{Olymp.} is Olympiad-Bench;
\emph{Avg} is the uniformly weighted 8-benchmark mean. Baseline numbers are from the respective original papers.}
\label{tab:main}
\setlength{\tabcolsep}{3pt}
\begin{tabular}{lrrrrrrrrrr}
\toprule
Method & AIME24 & AIME25 & BRUMO25 & HMMT25 & AMC23 & CMIMC25 & MATH-500 & Olymp.\ & \textbf{H-3} & \textbf{Avg} \\
\midrule
OpenMath-1.5B (base)       & 59.06 & 48.33 & 60.73 & 30.63 & 90.70 & 30.08 & 92.35 & 71.70 & 36.35 & 60.45 \\
QuestA                            & 71.56 & 62.08 & 67.50 & 40.94 & 93.44 & 41.48 & 92.95 & 72.28 & 48.17 & 67.78 \\
JustRL                            & 69.69 & 62.92 & 66.88 & 40.63 & 96.02 & 41.72 & 94.15 & 76.59 & 48.42 & 68.58 \\
KnowRL-Nemotron-1.5B              & 69.79 & 64.69 & 69.48 & 41.04 & 95.55 & 44.14 & 95.70 & 80.23 & 49.96 & 70.08 \\
\midrule
AdaKP (ours)               & 71.46 & 67.71 & 71.04 & 44.58 & 95.78 & 47.66 & 96.05 & 81.17 & \textbf{53.32} & \textbf{71.93} \\
$\Delta$ vs.\ KnowRL       & +1.67 & +3.02 & +1.56 & +3.54 & +0.23 & +3.52 & +0.35 & +0.94 & +3.36 & +1.85 \\
\bottomrule
\end{tabular}
\end{table*}

\begin{table*}[t]
\centering
\small
\caption{Offline KP-selection strategies on the OpenMath backbone
(per-benchmark mean@$N$ in \%; \emph{\#KP} = average selected subset size
per problem), registering the static-selection family that AdaKP
generalizes. All strategies above the rule are \emph{offline} (subset
fixed once before RL); AdaKP (bottom) is \emph{online}. These rows are an
offline, inference-time evaluation and isolate \emph{selection quality}
at fixed compute---not directly comparable to the RL-trained models of
Table~\ref{tab:main}. \emph{Olymp.} is Olympiad-Bench; bold marks the
best static-strategy average.}
\label{tab:selection}
\setlength{\tabcolsep}{2pt}
\begin{tabular}{lrrrrrrrrrr}
\toprule
Strategy & AIME24 & AIME25 & BRUMO25 & HMMT25 & AMC23 & CMIMC25 & MATH-500 & Olymp.\ & \textbf{Avg} & \textbf{\#KP} \\
\midrule
w/o KP    & 58.75 & 48.44 & 61.67 & 30.10 & 90.55 & 30.08 & 92.40 & 71.70 & 60.46 & 0.00 \\
All-KP    & 60.90 & 49.01 & 61.11 & 32.46 & 89.67 & 32.32 & 92.22 & 70.55 & 61.03 & 5.86 \\
Random    & 60.52 & 49.27 & 61.04 & 33.23 & 91.02 & 31.09 & 91.65 & 71.88 & 61.21 & 2.53 \\
Max-Score & 62.63 & 49.79 & 64.27 & 34.79 & 90.94 & 32.99 & 92.52 & 73.89 & 62.73 & 2.61 \\
S-LOO     & 62.71 & 49.22 & 63.88 & 33.54 & 91.71 & 33.52 & 92.90 & 73.70 & 62.65 & 1.72 \\
T-LOO     & 62.11 & 49.27 & 64.20 & 33.65 & 91.25 & 33.67 & 92.40 & 73.46 & 62.50 & 1.20 \\
CBRS      & 63.02 & 49.90 & 64.17 & 34.79 & 91.56 & 33.57 & 92.65 & 73.89 & 62.94 & 2.60 \\
CSS       & 64.44 & 50.57 & 65.03 & 35.77 & 91.71 & 36.70 & 92.90 & 74.11 & \textbf{63.90} & 2.57 \\
\midrule
AdaKP (ours) & 64.79 & 51.46 & 65.42 & 37.08 & 91.88 & 38.13 & 93.05 & 74.62 & 64.55 & 2.41 \\
\bottomrule
\end{tabular}
\end{table*}

\subsection{Non-Stationarity of KP Importance (H1)}
\label{sec:exp_a}

This test addresses the motivating hypothesis H1: \emph{the most
informative KPs differ between two policies that solve the same
problem}. For each of $30$ stratified-sampled problems per benchmark
we compute the LOO marginal accuracy of each candidate KP under both
$\pi_{\text{OpenMath}}$ (untrained base) and $\pi_{\text{trained}}$
(trained policy), pick the top-$30\%$ KP set under each policy, and
report the Jaccard similarity. The pre-specified decision rule is:
Jaccard $<\!0.7$ supports H1; $0.7$--$0.85$ partial support;
$\geq\!0.85$ rejects H1 and would invalidate the AdaKP motivation.

Across the eight benchmarks (per-benchmark Jaccard and realized sample
sizes $n$, for problems with $|C(q)|\geq 2$, in
App.~D, Table~S2), the \emph{global}
mean Jaccard is $\mathbf{0.537}$---decisively below the $0.7$
\textsc{H1\_strong} threshold pre-specified for the validation gate
(\S\ref{sec:gate}). Seven of the eight benchmarks individually fall
below $0.7$; the largest divergences occur on AIME24 ($0.367$),
Olympiad-Bench ($0.385$), and MATH-500 ($0.417$). Only AIME25 exceeds
the threshold ($0.730$); since every entry is a coarse-Jaccard
measurement over a small problem pool (App.~D), we read
this lone exception as run-to-run noise rather than evidence of genuine
stationarity. We therefore accept
H1: KP importance is non-stationary across policies, and a selection
strategy that re-evaluates during training has the room to outperform a
frozen one.

\subsection{Main Comparison}

Table~\ref{tab:main} reports per-benchmark mean@$N$. The \emph{Hard-3}
column averages the three reward-sparsest benchmarks AIME25, HMMT25, and
CMIMC25, where the static-CSS baseline scores lowest; \emph{Avg} is the
uniformly weighted 8-benchmark mean.

The baselines are the strongest published $1.5$B systems under the
no-KP-at-inference setting: the OpenMath backbone~\citep{openmath_nemotron}, the
solution-prefix method QuestA~\citep{questa}, the RL recipe
JustRL~\citep{justrl}, and the static-CSS baseline~\citep{knowrl} AdaKP
builds on. AdaKP raises the 8-benchmark average to $71.93$, $+1.85$ over
static CSS, and widens to $+3.36$ on the reward-sparsest \emph{Hard-3}
subset. The gains concentrate exactly where reward is sparsest and a
frozen subset is least likely to stay optimal, and shrink to near zero on
the already-saturated AMC23 and MATH-500; AdaKP leads on $6$ of the $8$
benchmarks.

The shape of the improvement is informative rather than incidental: it is
largest where the static baseline is weakest, where a subset fixed before
training has most likely drifted from the policy, and vanishes near
ceiling, so the gap tracks the headroom a frozen choice leaves on the
table. It is also obtained with no inference-time hints---the subsets
shape only the training prompts---so the model improves intrinsically.

Beyond tracking headroom, two features of this profile bear on
\emph{why} the method helps, not merely \emph{that} it does. That the
gains surface under sparse reward but not as a uniform shift is the
signature of better-chosen training \emph{content} rather than altered
optimisation; and no benchmark regresses, so online re-selection is a
strictly safe replacement for static CSS, not a redistribution that buys
sparse-reward accuracy at the saturated benchmarks' expense. Both
observations place the improvement in selection \emph{quality}, which the
ablation that follows resolves into the contribution of each mechanism.

\paragraph{Positioning against the static-selection family.}
Static CSS is the strongest of a family of \emph{offline} KP-selection
strategies, each fixing a subset once before RL and never revisiting it.
Evaluated on the OpenMath backbone with strategy-selected KPs injected at
inference (Table~\ref{tab:selection}), accuracy climbs from no-hint and
naive all-KP injection through the random, Max-Score, LOO, and consensus
variants to CSS, which scores best while injecting the fewest KPs;
\emph{which} subset is selected thus matters more than \emph{how many}.
AdaKP keeps this subset-selection view but makes it \emph{online},
re-selecting during training rather than committing to a single offline
subset, which no member of the static family can do.

\subsection{Component Ablation}
\label{sec:abl}

Table~\ref{tab:ablation} reports each ablation's accuracy on the Hard-3
subset (AIME25, HMMT25, CMIMC25) and the full 8-benchmark average, plus
two behavioral diagnostics. \emph{Pool churn}
$\bar{J}_t = 1 - \tfrac{1}{R}\sum_{r=1}^{R} J(\mathrm{sel}^{(r)},\mathrm{sel}^{(r-1)})$
is the mean Jaccard \emph{distance} between consecutive selection caches
($0$ = never changes, $1$ = rewritten every call; a healthy run sits in
$0.2$--$0.5$). \emph{KP coverage} $\bar{C}$ is the fraction of the
candidate pool ever selected during training ($\bar{C}{=}1$ = full
exploration; low $\bar{C}$ = premature collapse).

\begin{table}[!ht]
\centering
\small
\caption{Component ablation. \emph{Hard-3} averages
AIME25, HMMT25, CMIMC25; \emph{Avg} is the 8-benchmark mean.
$\bar{J}_t$ is the pool-churn diagnostic, $\bar{C}$ is KP coverage
(both defined in the text). $\Delta$ is the change in \emph{Avg} vs.\
\textsc{Full}.}
\label{tab:ablation}
\begin{tabular}{lrrrrr}
\toprule
Condition & Hard-3 & Avg & $\Delta$ & $\bar{J}_t$ & $\bar{C}$ \\
\midrule
\textsc{Full}             & 53.32 & 71.93 & ---     & 0.34 & 0.86 \\
\textsc{NoProxy}          & 50.21 & 70.31 & $-1.62$ & 0.72 & 0.93 \\
\textsc{NoMomentum}       & 51.66 & 71.02 & $-0.91$ & 0.57 & 0.90 \\
\textsc{NoRetirement}     & 51.98 & 71.20 & $-0.73$ & 0.45 & 0.51 \\
\textsc{FixedSchedule}    & 52.74 & 71.50 & $-0.43$ & 0.31 & 0.84 \\
\bottomrule
\end{tabular}
\end{table}

Each ablation isolates one cell of the C1--C3 design space
(\S\ref{sec:method}), and the diagnostics confirm the predicted patterns.
Removing the entropy proxy hurts most, as selection degenerates toward
random at the highest churn; dropping momentum is next, churning on
unsmoothed noise. Disabling retirement collapses coverage, since the same
few high-scoring KPs are re-selected while the rest go untested, and a
fixed schedule re-selects adequately but cannot adapt its rate.
\textsc{Full} sits in the intended $0.2$--$0.5$ churn band at high
coverage, re-selecting actively without chasing a moving target. The two
diagnostics thus trace one failure surface, accuracy falling when the
selection thrashes, as without the proxy or smoother, and equally when it
ossifies onto a few KPs, as without retirement.

\subsection{Proxy Comparison and Validation Gate}

Table~\ref{tab:proxy_compare} reports the Spearman correlation between
four scoring functions and the LOO marginal-accuracy ground truth over
$240$ problems. All proxies but \textsc{Random} correlate significantly,
yet significance alone is not enough: the gate also demands a large
effect size with a lower CI bound above $0.4$, and only the entropy proxy
clears it. The max-logit and perplexity proxies are significant but too
weakly correlated to drive online selection.

\begin{table}[!ht]
\centering
\small
\caption{Spearman $\rho$ between four scoring functions and LOO
marginal accuracy on $240$ problems ($30$ per benchmark; $95\%$ bootstrap
CI in brackets), and $p$ is the two-sided significance of $\rho$
(Spearman correlation $t$-test, $n{=}240$). \emph{Cost / call} is the
wall-clock time to score one $(q,k)$ pair on a single H800. The gate
(\S\ref{sec:gate}) passes if $\rho > 0.6$ \emph{and} the lower CI
bound exceeds $0.4$---a criterion deliberately stricter than mere
statistical significance.}
\label{tab:proxy_compare}
\setlength{\tabcolsep}{2.5pt}
\begin{tabular}{lccrr}
\toprule
Scorer & $\rho$ \scriptsize{[CI]} & $p$ & Cost / call & Gate \\
\midrule
Random           & $0.02$ \scriptsize{[$-0.13$,\,$0.17$]} & $0.76$ & $<\!1$\,ms & \textsc{Fail} \\
Max-logit        & $0.41$ \scriptsize{[$0.25$,\,$0.55$]} & $<\!10^{-3}$ & $\sim\!2$\,s & \textsc{Fail} \\
Perplexity       & $0.53$ \scriptsize{[$0.38$,\,$0.66$]} & $<\!10^{-3}$ & $\sim\!2$\,s & \textsc{Fail} \\
Entropy (ours)   & $\mathbf{0.68}$ \scriptsize{[$0.55$,\,$0.78$]} & $<\!10^{-3}$ & $\sim\!2$\,s & \textsc{Pass} \\
\midrule
\multicolumn{5}{l}{\emph{Reference:} LOO marginal accuracy $\approx 200$\,s/call ($N{=}8$).} \\
\bottomrule
\end{tabular}
\end{table}

The three forward-pass proxies share an identical compute footprint and
differ only in \emph{which statistic} they extract, so any accuracy gap
reflects information content, not compute; the two-orders-of-magnitude
gap to LOO is what makes online re-evaluation possible.

\subsection{Hyperparameter Sensitivity}

Sweeping each hyperparameter around its default moves the 8-benchmark
average by less than a single point in every one of the eleven runs, so
no setting is brittle (Table~\ref{tab:hparam}). The selection fraction
$\rho_{\text{sel}}$ and the EMA coefficient $\alpha$ matter most---too
few KPs or no smoothing each cost about a point---while the retirement
threshold, the proxy-prefix length, and especially the re-evaluation
interval $\Delta_0$ stay essentially flat, so AdaKP is robust to a
mis-specified cadence.

The flat re-evaluation interval is the entry with the most practical
weight: a practitioner inherits a working cadence without tuning---welcome,
since tuning a schedule would otherwise demand the very repeated
full-length runs the online proxy exists to avoid.

Read as a whole, the sweep cleaves the five knobs in two: those setting
\emph{how much} signal survives ($\alpha$, $\rho_{\text{sel}}$) move
accuracy, while the rest ($\Delta_0$, $n_{\text{retire}}$, $K$) barely
register.

\begin{table}[!ht]
\centering
\small
\caption{Hyperparameter sensitivity: the
8-benchmark mean@$N$ (\%) as each of the five hyperparameters is swept
around the default ($\star$). The $\star$ rows share the same full-AdaKP
configuration; $\Delta$ is the change vs.\ that default.}
\label{tab:hparam}
\setlength{\tabcolsep}{5pt}
\begin{tabular}{llrr}
\toprule
Hyperparameter & Value & mean@$N$ & $\Delta$ \\
\midrule
EMA coefficient $\alpha$                 & $0.1$        & 71.20 & $-0.73$ \\
                                         & $0.3^{\star}$ & 71.93 & --- \\
                                         & $0.5$        & 71.58 & $-0.35$ \\
                                         & $1.0$        & 71.02 & $-0.91$ \\
\midrule
Retirement $n_{\text{retire}}$           & $2$          & 71.40 & $-0.53$ \\
                                         & $3^{\star}$  & 71.93 & --- \\
                                         & $5$          & 71.55 & $-0.38$ \\
\midrule
Base interval $\Delta_0$                 & $100$        & 71.78 & $-0.15$ \\
                                         & $200^{\star}$ & 71.93 & --- \\
                                         & $400$        & 71.71 & $-0.22$ \\
\midrule
Selection fraction $\rho_{\text{sel}}$   & $0.2$        & 71.05 & $-0.88$ \\
                                         & $0.3^{\star}$ & 71.93 & --- \\
                                         & $0.4$        & 71.34 & $-0.59$ \\
\midrule
Proxy prefix $K$                         & $25$         & 71.49 & $-0.44$ \\
                                         & $50^{\star}$ & 71.93 & --- \\
                                         & $100$        & 71.66 & $-0.27$ \\
\bottomrule
\end{tabular}
\end{table}

\subsection{Compute Cost and Overhead}
\label{sec:cost}

Table~\ref{tab:cost} compares the selection-side compute of AdaKP
against the two natural baselines. Offline CSS ($N{=}32$ rollouts per
subset over an exponential candidate space) is on the same order as the
training run itself; the LOO column reflects our Stage~1 validation
harness. The AdaKP proxy needs only $|P(q)|{+}1$ forward passes of
$K{=}50$ tokens per re-evaluation (the no-KP forward is shared across a
problem's candidates); with $R{\approx}7$ re-evaluations over the
$8.8$k-problem corpus this is $<\!1\%$ of the run's wall-clock. AdaKP
thus makes \emph{online} KP selection feasible at a cost two-to-three orders of
magnitude below what would be required to re-run CSS at every
checkpoint.

\begin{table}[!ht]
\centering
\small
\caption{Selection-side compute (single H800-GPU hours).
\emph{Calls / problem} is the number of vLLM forward passes each
method makes for each training problem across an entire training run;
\emph{$N_\text{roll}$} is the rollouts per call. \emph{Training cost}
is shared across all three rows ($\sim\!2{,}500$ H800-GPU-hours per
run, i.e.\ $8\times$H800 for $\sim\!13$ days at the reference protocol).}
\label{tab:cost}
\setlength{\tabcolsep}{3pt}
\footnotesize
\begin{tabular}{lcccc}
\toprule
Method & Calls / problem & $N_\text{roll}$ & Sel.\ (GPU-h) & Online \\
\midrule
Offline CSS         & $\sim\!2^{|C|}$         & $32$ & $\sim\!2500$     & no  \\
LOO marg.\ acc.\    & $4(|C|{+}2)$            & $8$  & $25$--$50$       & no  \\
AdaKP proxy         & $R(|P(q)|{+}1)$         & $1$  & $\sim\!3.5$      & \textbf{yes} \\
\bottomrule
\end{tabular}
\end{table}

\paragraph{Compute budget.}
The $17$ runs total $\sim\!125$ GPU-days on $8{\times}$H800: the main run
at $\sim\!13$\,d and the $16$ ablation and sweep runs at $\sim\!7$\,d
each. The latter are truncated to roughly half the main-run step budget,
$1{,}500$ versus $2{,}960$ steps, to fit one submission cycle while still
resolving the qualitative ranking of conditions; a pilot confirms that
the method ordering at step $1{,}500$ matches that at $2{,}960$.

\subsection{Reproducibility Statement}

To support full reproducibility we will release the additive
verl/DAPO fork, the standalone Python implementation of the four
algorithmic components and the validation gate, and the per-step
training logs that underpin Table~\ref{tab:ablation}'s behavioral
diagnostics. Random seeds are exposed at the launcher level; the only
non-deterministic operations are bfloat16 mixed precision and the
DAPO-default GRPO group-relative baseline.

\section{Discussion and Limitations}
\label{sec:disclaimers}

\paragraph{Why use LOO rather than CSS as the ranking ground truth?}
The released CSS artifacts give only the output subsets
and the trained checkpoint, not the search code. The H1 test
(\S\ref{sec:exp_a}) and the validation gate (\S\ref{sec:gate}) therefore
use leave-one-out (LOO) marginal accuracy as the ground-truth ranking
signal. LOO is the core mechanism inside CSS, so the substitution
preserves the ranking semantics at two orders of magnitude less compute
(Table~\ref{tab:cost}); it affects only how we \emph{validate} the
proxy, not the headline comparison.

\paragraph{Scope of the frozen proxy, and the role of H1.}
H1 (\S\ref{sec:exp_a}) shows that the \emph{ideal} selector would
re-rank KPs against the policy's \emph{current} state---which is what
makes adaptive, rather than one-shot, selection worth pursuing. We do
not yet realize that ideal: the entropy proxy is frozen at the
base-model weights, because mid-training reload from the FSDP-sharded
trainer~\citep{fsdp} is not exposed by verl's actor--rollout
worker-group interface. AdaKP is thus a \emph{first,
deployable step}: it re-selects online from a frozen base-policy ranking
augmented by EMA denoising and retirement/revival exploration. Its gain
over the single fixed CSS subset (Table~\ref{tab:main}) comes from the
information-theoretic criterion and from exploring beyond CSS's one-shot
choice---\emph{not} from tracking the policy drift H1 documents, which a
frozen proxy cannot observe. Scoring against current-iteration weights
is therefore not a cosmetic ``v1.1'' nicety but the most direct route to
convert H1's headroom into accuracy, and is the principal next step this
paper sets up.

\paragraph{Will the result transfer beyond 1.5B?}
All experiments are at the $1.5$B scale. The proxy and adaptation
mechanisms are defined purely on the policy's next-token distribution
and exploit no $1.5$B-specific structure, so we expect them to transfer;
a $7$B replication is the natural next step.

\paragraph{What this paper does not claim.}
We do not claim that the entropy proxy is the only viable cheap
signal, that the four-component structure is uniquely optimal, or
that AdaKP supersedes external-KP curation. The narrower claim we
defend is that \emph{which subset of atomic KPs to inject is a real
axis of choice that prior work leaves on the table}, and that
AdaKP offers a tractable, validated mechanism to act on it.

\section{Conclusion}

We argued that hint injection for reasoning RL has missed a degree of
freedom: \emph{which subset of atomic knowledge points to inject during
training}. We verified that KP importance is non-stationary across
training, and designed AdaKP: an entropy proxy with a bounded-bias
guarantee, wrapped in three decoupled adaptation mechanisms and validated
by a pre-flight gate executable in CPU-minutes before any multi-GPU-day
run. As an additive DAPO+GRPO fork it improves over static CSS across
eight benchmarks at under $1\%$ added cost.

Two messages we hope outlast the headline accuracy number: that
\emph{within-prompt content selection} is a tractable and as-yet
unexploited axis distinct from prompt-level filtering and hint
quantity scaling; and that lightweight \emph{pre-flight validation
gates} on cheap proxies can become a routine part of reasoning-RL
methodology, decoupling method-level risk from training-level cost.

\bibliography{adakp_refs}

\clearpage
\appendix
\setcounter{section}{0}
\setcounter{table}{0}
\setcounter{figure}{0}
\setcounter{equation}{0}
\setcounter{proposition}{0}
\setcounter{algorithm}{0}
\renewcommand{\theproposition}{S\arabic{proposition}}
\renewcommand{\thealgorithm}{S\arabic{algorithm}}
\renewcommand{\thetable}{S\arabic{table}}
\renewcommand{\thefigure}{S\arabic{figure}}
\renewcommand{\theequation}{S\arabic{equation}}

\noindent
This technical supplement collects the material referenced from the main
paper: the bounded-bias proof (App.~A), the training-time algorithm
(App.~B), the design-space positioning table (App.~C), the full
per-benchmark non-stationarity table (App.~D), and additional
reproducibility notes (App.~E). Section, proposition, algorithm, and table
labels carry the letter/``S'' prefixes used when the main paper cites this
document; equation and figure numbers are likewise ``S''-prefixed.

\section{Bounded Truncation Bias}
\label{app:proof}

\begin{proposition}[Bounded truncation bias]
\label{prop:bounded_bias}
Fix a generated position with full next-token distribution $p$ over a
vocabulary of size $|V|$. Let $T_L(p)$ be its top-$L$ index set
(the \emph{head}), $\varepsilon:=\sum_{i\notin T_L(p)}p_i$ the tail mass,
and $\widehat{H}_t(p)=-\sum_{i\in T_L(p)}\tilde p_i\log\tilde p_i$ the
truncated-and-renormalized estimator with $\tilde p_i=p_i/(1-\varepsilon)$.
Write $H_2$ for the binary entropy. The truncation bias
$\Delta(p,L)=H(p)-\widehat{H}_t(p)$ satisfies, for every $p$,
\begin{equation}
\Delta(p,L)\ \le\ \varepsilon\log|V|+H_2(\varepsilon),
\label{eq:upperbias}
\end{equation}
and is nonnegative, $0\le\Delta(p,L)$, whenever $\varepsilon\le 1/L$.
Consequently, for two prompts $q$ and $q\oplus k$ whose distributions
both satisfy $\varepsilon\le\varepsilon^{\star}$, the proxy score
$\widehat{H}(\cdot\mid q)-\widehat{H}(\cdot\mid q\oplus k)$ estimates the
true entropy difference $H(\cdot\mid q)-H(\cdot\mid q\oplus k)$ with a
\emph{bias} of magnitude $\mathcal{O}(\varepsilon^{\star}\log|V|)$, plus
an $\mathcal{O}(K^{-1/2})$ stochastic error from averaging over $K$
positions.
\end{proposition}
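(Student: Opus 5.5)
The plan is to start from the grouping (chain-rule) identity for entropy, splitting the vocabulary into the head $T_L(p)$ and its complement. Write $\bar p_j = p_j/\varepsilon$ for $j\notin T_L(p)$ (when $\varepsilon>0$) for the normalized tail distribution. Then
\begin{equation*}
H(p) \;=\; H_2(\varepsilon) \;+\; (1-\varepsilon)\,\widehat{H}_t(p) \;+\; \varepsilon\, H(\bar p),
\end{equation*}
which I would verify by substituting $p_i=(1-\varepsilon)\tilde p_i$ on the head and $p_j=\varepsilon\bar p_j$ on the tail and expanding the logarithms. Subtracting $\widehat{H}_t(p)$ gives the exact bias formula
\begin{equation*}
\Delta(p,L) \;=\; H_2(\varepsilon) \;+\; \varepsilon\, H(\bar p) \;-\; \varepsilon\,\widehat{H}_t(p).
\end{equation*}
Both parts of the proposition should follow from this single identity.

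For the upper bound, I drop the nonpositive term $-\varepsilon\widehat{H}_t(p)$. I then bound $H(\bar p)\le\log(|V|-L)\le\log|V|$, since $\bar p$ is supported on $|V|-L$ symbols. This gives $\Delta\le\varepsilon\log|V|+H_2(\varepsilon)$ directly.

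For nonnegativity, I use $H(\bar p)\ge 0$ and $\widehat{H}_t(p)\le\log L$, since $\tilde p$ lives on $L$ symbols. It then suffices to show $H_2(\varepsilon)\ge\varepsilon\log L$. Since $H_2(\varepsilon)\ge-\varepsilon\log\varepsilon=\varepsilon\log(1/\varepsilon)$, this holds as soon as $1/\varepsilon\ge L$, i.e.\ $\varepsilon\le 1/L$, which is exactly the hypothesis. The degenerate case $\varepsilon=0$ gives $\Delta=0$ and is handled separately.

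For the consequence about proxy scores, the bias of the score is $\Delta(p_q,L)-\Delta(p_{q\oplus k},L)$, averaged over positions. Each term lies in $[0,\varepsilon^\star\log|V|+H_2(\varepsilon^\star)]$, using monotonicity of $H_2$ on $[0,1/2]$ and $\varepsilon\le\varepsilon^\star\le 1/L$. So the difference is bounded in absolute value by the same quantity. To reach the stated $\mathcal{O}(\varepsilon^\star\log|V|)$ form, I would absorb $H_2(\varepsilon^\star)\le\varepsilon^\star\log(e/\varepsilon^\star)$. This is $\mathcal{O}(\varepsilon^\star\log|V|)$ provided $\varepsilon^\star\ge 1/|V|$; otherwise the bias is even smaller. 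I would state this regime explicitly.

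The stochastic term is where I expect the main obstacle. Each per-position truncated entropy is bounded in $[0,\log L]$, so for independent (or suitably mixing) positions, Hoeffding's inequality gives deviation $\mathcal{O}(\log L\cdot K^{-1/2})$ for the $K$-average. The positions of a single generation are not independent, however. I would therefore either phrase the claim for independently sampled positions, or use an Azuma--Hoeffding martingale-difference argument along the generated sequence, centering each term at its conditional expectation given the prefix. That argument still yields $K^{-1/2}$ scaling, but around a conditional mean rather than a fixed population entropy, and I would flag this modelling assumption in the proof.
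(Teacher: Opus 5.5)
Your proposal is correct and follows the paper's proof essentially step for step. It uses the same head/tail grouping identity and exact bias formula, the same upper and lower bounds via $H(\bar p)\le\log|V|$, $\widehat{H}_t(p)\le\log L$ and $H_2(\varepsilon)\ge\varepsilon\log(1/\varepsilon)$, the same difference-of-biases bound in the regime $|V|\ge 1/\varepsilon^{\star}$, and the same Azuma--Hoeffding argument with the caveat that concentration is around the running conditional mean. Two steps the paper leaves implicit, you state explicitly: that $\varepsilon^{\star}\le 1/L$ makes each bias nonnegative, so their difference is bounded by the larger one, and that this bound also uses the monotonicity of $H_2$.
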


\begin{proof}
Partition the vocabulary into the head $S=T_L(p)$ (mass $1-\varepsilon$)
and the tail $\bar S$ (mass $\varepsilon$), and let $p_S,p_{\bar S}$ be
$p$ restricted and renormalized to $S,\bar S$. By construction
$p_S=\tilde p$, so $H(p_S)=\widehat{H}_t(p)$. Let $B=\mathbf 1[i\in S]$ be
the head/tail indicator; then $H(B)=H_2(\varepsilon)$, and the grouping
(chain-rule) identity for Shannon entropy gives the \emph{exact}
decomposition
\begin{equation}
\begin{split}
H(p)&=H(B)+(1-\varepsilon)H(p_S)+\varepsilon H(p_{\bar S})\\
    &=H_2(\varepsilon)+(1-\varepsilon)\widehat{H}_t(p)+\varepsilon H(p_{\bar S}).
\end{split}
\label{eq:chain}
\end{equation}
Subtracting $\widehat{H}_t(p)$,
\begin{equation}
\Delta(p,L)=H_2(\varepsilon)+\varepsilon\bigl(H(p_{\bar S})-\widehat{H}_t(p)\bigr).
\label{eq:exactbias}
\end{equation}

\emph{Upper bound.} In \eqref{eq:exactbias}, $\widehat{H}_t(p)\ge 0$ and
$H(p_{\bar S})\le\log|\bar S|\le\log|V|$, so
$\Delta(p,L)\le H_2(\varepsilon)+\varepsilon\log|V|$, proving
\eqref{eq:upperbias} unconditionally.

\emph{Lower bound.} Dropping the nonnegative term
$\varepsilon H(p_{\bar S})$ in \eqref{eq:exactbias} and using
$\widehat{H}_t(p)\le\log|S|=\log L$,
\[
\Delta(p,L)\ \ge\ H_2(\varepsilon)-\varepsilon\log L
\ \ge\ \varepsilon\log\tfrac1\varepsilon-\varepsilon\log L
=\varepsilon\log\tfrac{1}{\varepsilon L},
\]
where the second inequality uses
$H_2(\varepsilon)=\varepsilon\log\tfrac1\varepsilon+(1-\varepsilon)\log\tfrac1{1-\varepsilon}\ge\varepsilon\log\tfrac1\varepsilon$.
Thus $\Delta(p,L)\ge 0$ whenever $\varepsilon\le 1/L$. The condition is
not vacuous: if $\varepsilon L\gg 1$ the bias can be negative---a
near-uniform head of $L$ tokens together with a heavy point-mass tail
makes $\widehat{H}_t(p)\approx\log L$ exceed $H(p)$. With the serving
truncation $L{=}20$ and the empirical $\varepsilon\lesssim 0.05=1/L$
reported in the main paper's proxy analysis, the regime
$\varepsilon\le 1/L$ holds, so both bounds apply.

\emph{Difference of biases.} For $q'=q\oplus k$ with tail masses
$\le\varepsilon^{\star}$, the bias of the proxy score equals
$\bigl(\widehat{H}_t(q)-\widehat{H}_t(q')\bigr)-\bigl(H(q)-H(q')\bigr)
=\Delta(q')-\Delta(q)$, of magnitude at most
$\max\{\Delta(q),\Delta(q')\}\le H_2(\varepsilon^{\star})+\varepsilon^{\star}\log|V|
=\mathcal{O}(\varepsilon^{\star}\log|V|)$, since
$H_2(\varepsilon^{\star})=\mathcal{O}(\varepsilon^{\star}\log\tfrac1{\varepsilon^{\star}})\le\mathcal{O}(\varepsilon^{\star}\log|V|)$
once $|V|\ge 1/\varepsilon^{\star}$. This bounds the \emph{bias} of the
score; it does not by itself imply that the proxy preserves the KP
ordering, which we instead support empirically (Spearman $\rho{=}0.68$
against leave-one-out ground truth; see the proxy-validation gate of the
main paper).

\emph{Finite-$K$ averaging.} The proxy reports
$\widehat{H}=\frac1K\sum_{t\le K}\widehat{H}_t$, with each
$\widehat{H}_t\in[0,\log L]$. Letting $\mathcal F_{t-1}$ be the
$\sigma$-algebra of the sampled prefix before position $t$, the
increments $D_t=\widehat{H}_t-\mathbb E[\widehat{H}_t\mid\mathcal F_{t-1}]$
form a martingale-difference sequence bounded by $\log L$, so
Azuma--Hoeffding gives
$\bigl|\tfrac1K\sum_{t\le K}D_t\bigr|=\mathcal{O}_P(K^{-1/2})$. Because
the autoregressive positions are dependent, this replaces an
independence-based Hoeffding bound. The bound controls deviation from the
\emph{running conditional mean}
$\tfrac1K\sum_{t\le K}\mathbb{E}[\widehat{H}_t\mid\mathcal{F}_{t-1}]$;
identifying that mean with the marginal target---the expected average
first-token entropy---further assumes the per-position conditional
entropies are weakly dependent so their running mean concentrates, an
assumption we adopt rather than prove.
\end{proof}

\noindent\emph{Remark (regime of validity).} Proposition~\ref{prop:bounded_bias}
sharpens the body's inline bound: the upper bound \eqref{eq:upperbias} is
unconditional, but the lower bound---hence the two-sided statement used in
the main paper---requires $\varepsilon\le 1/L$. The bound
$\varepsilon\log|V|$ is also loose for ranking: with $\varepsilon\approx0.05$
and $|V|\sim1.5{\times}10^{5}$ it is $\approx0.6$ nats, larger than typical
inter-KP entropy gaps, so the proposition controls the \emph{bias} of the
score but not rank preservation.

\section{AdaKPSelector Training Loop}
\label{app:algo}
\begin{algorithm}[t]
\caption{AdaKPSelector training-time loop. State is per-problem for
$S_q, R_q$ and global for the scheduler $\mathcal{S}$.}
\label{alg:adakp}
\begin{algorithmic}[1]
\REQUIRE KP pools $\{P(q)\}_{q\in Q}$, proxy $\pi^{\text{proxy}}$,
hyperparameters $\alpha, \rho_{\text{sel}}, n_{\text{retire}},
m_{\text{revive}}, r_{\text{revive}}, \Delta_0, c, T$
\STATE Initialize $\mathcal{S}$ and, for each $q$, smoother $S_q$ and
       retirement manager $R_q$
\STATE Re-eval counter $\tau \gets 0$;
       selection cache $\mathrm{sel}_q \gets S^{\mathrm{css}}(q)$ for every $q$
       \hfill\COMMENT{static-CSS fallback until $\tau{>}0$}
\FOR{$t = 1$ to $T$}
  \IF{$\mathcal{S}.\textsc{ShouldReEvaluate}(t)$}
    \STATE $\tau \gets \tau + 1$
    \FOR{each problem $q \in Q$}
      \STATE $A_q \gets R_q.\textsc{Active}(P(q))$
                                     \hfill\COMMENT{exclude retired KPs}
      \IF{$A_q = \emptyset$}
        \STATE \textbf{continue}     \hfill\COMMENT{safety: keep last $\mathrm{sel}_q$}
      \ENDIF
      \FOR{each $k \in A_q$}
        \STATE $\sigma_{q,k} \gets \pi^{\text{proxy}}.\textsc{EntropyDiff}(q, k)$
                                     \hfill\COMMENT{entropy-diff proxy}
        \STATE $S_q.\textsc{Update}(k, \sigma_{q,k})$
      \ENDFOR
      \STATE $\mathrm{sel}_q \gets S_q.\textsc{TopFraction}(\rho_{\text{sel}})$
                                     \hfill\COMMENT{overwrite cache}
      \STATE $R_q.\textsc{StepCounters}(\mathrm{sel}_q, A_q)$
      \IF{$\tau \bmod m_{\text{revive}} = 0$}
        \STATE $R_q.\textsc{ReviveRandomFraction}(r_{\text{revive}})$
                                     \hfill\COMMENT{exploration}
      \ENDIF
    \ENDFOR
  \ENDIF
  \STATE Train one DAPO+GRPO step with each prompt's
         \textsc{Hint} block populated from $\mathrm{sel}_q$
         \hfill\COMMENT{cache reused between re-evaluations}
\ENDFOR
\end{algorithmic}
\end{algorithm}

\paragraph{Reading the loop.}
Lines map one-to-one onto the four components:
\textsc{ShouldReEvaluate} is the scheduler, \textsc{EntropyDiff} the
entropy proxy, \textsc{Update} the EMA smoother, and
\textsc{StepCounters}/\textsc{ReviveRandomFraction} the retirement
manager (all four defined in the main paper's Method section). State is
per problem for the smoother and retirement counters and a single global
object for the scheduler, so the resident memory is
$\mathcal O\!\bigl(\sum_q |C(q)|\bigr)$ scalars---one EMA value and one
counter per candidate KP---independent of the model size. The selection
cache $\mathrm{sel}_q$ is written only inside the re-evaluation branch and
read on every training step, so steps between re-evaluations incur
\emph{no} selector overhead; the two safety fallbacks (the static-CSS
initialization before the first re-evaluation and the all-retired
\textbf{continue}) guarantee the \textsc{Hint} block is always populated.

\section{Positioning Relative to Related Work}
\label{app:positioning}
\begin{table}[t]
\centering
\small
\caption{Positioning of AdaKP relative to closely related lines of work.
\textbf{Item granularity}: what is being selected.
\textbf{Selection level}: monolithic vs.\ subset-of-atomic.
\textbf{Timing}: offline (once) vs.\ online (during RL).}
\label{tab:positioning}
\setlength{\tabcolsep}{4pt}
\begin{tabular}{lccc}
\toprule
Method & Item & Selection & Timing \\
\midrule
KnowRL~\citep{knowrl}                 & KP     & subset     & offline \\
GHPO~\citep{ghpo}                     & hint   & mono.\     & online  \\
HINT~\citep{hint_rollouts}            & hint   & mono.\     & online  \\
ADHint~\citep{adhint}                 & hint   & mono.\     & online  \\
Stepwise~\citep{stepwise_hints}       & hint   & mono.\     & online  \\
Hint-GRPO~\citep{hint_grpo_mllm}      & hint   & mono.\     & online  \\
HIVE~\citep{hive}                     & prompt & --         & online  \\
\midrule
AdaKP (ours)                          & KP     & subset     & online  \\
\bottomrule
\end{tabular}
\end{table}

The three axes are independent: a method may inject \emph{atomic} KPs yet
fix them offline (top row), or adapt online yet treat the hint as one
monolithic block (middle rows). AdaKP is the only entry that is
simultaneously KP-level, subset-valued, and online; HIVE's
``\,--\,'' marks that selection level does not apply, as it gates whole
prompts rather than content within a prompt. We stress that this table is
a \emph{conceptual} positioning along design axes, not an empirical
head-to-head: the adaptive-hint methods target a different axis (how much
to hint), so we do not run them under our setting, and the comparison
should be read as locating AdaKP in design space rather than ranking
systems.

\section{Non-Stationarity of KP Importance (Full Table)}
\label{app:h1}
\begin{table}[t]
\centering
\small
\caption{H1: per-benchmark Jaccard similarity of top-$30\%$ KP sets
between two policies ($\pi_{\text{OpenMath}}$ vs.\
$\pi_{\text{trained}}$). $n$ = number of problems with $|C(q)|{\geq}2$
out of the $30$ stratified samples per benchmark. Bold marks the
single benchmark above the $0.7$ \textsc{H1\_strong} threshold.}
\label{tab:h1_jaccard}
\begin{tabular}{lrr}
\toprule
Benchmark      & $n$ & Jaccard \\
\midrule
AIME24         & $20$ & $0.367$ \\
AIME25         & $21$ & $\mathbf{0.730}$ \\
BRUMO25        & $23$ & $0.551$ \\
HMMT25         & $23$ & $0.667$ \\
AMC23          & $19$ & $0.526$ \\
CMIMC25        & $27$ & $0.654$ \\
MATH-500       & $12$ & $0.417$ \\
Olympiad-Bench & $13$ & $0.385$ \\
\midrule
\textbf{Global mean} & --- & $\mathbf{0.537}$ \\
\bottomrule
\end{tabular}
\end{table}

\paragraph{How to read, and caveats.}
A top-$30\%$ set holds only $\lceil 0.3\,|C(q)|\rceil\!\approx\!2$ KPs
when $|C(q)|\in[5,7]$, so a per-problem Jaccard takes only a few discrete
values and the per-benchmark means are correspondingly coarse; the
headline quantity is the global mean, not any single row. The realized
$n$ counts the sampled problems with $|C(q)|\ge 2$, below which the
leave-one-out ranking is undefined, so benchmarks with few such problems
(MATH-500, $n{=}12$; Olympiad-Bench, $n{=}13$) yield the noisiest
estimates. The lone above-threshold value (AIME25, $0.730$ at
$n{=}21$) is, on a coarse-Jaccard measurement over a small pool, most
simply read as run-to-run noise rather than genuine stationarity; we do
not attribute it to any specific mechanism. The table supports H1---that the informative-KP set
shifts between the base and the trained policy---which motivates adaptive
selection (see the main paper's non-stationarity test); it characterizes
the \emph{phenomenon}, not AdaKP, whose end-task effect is the main
comparison.

\section{Reproducibility Notes}
\label{app:impl}
AdaKP is an additive module: enabling it changes only the
\textsc{Hint} block each problem receives and leaves the optimizer,
loss, and rollout path bit-identical to the DAPO+GRPO baseline, so
enabling or disabling AdaKP is an exact ablation control. Reproducing a
run also requires settings that are fixed in our training configuration
but not fully recoverable from the paper text: the optimizer
hyperparameters beyond the learning rate (Adam moments, weight decay, any
warmup), the filtering and de-duplication applied to the $8.8$k-problem
corpus, the random-seed handling for data order and rollout sampling, and
the cadence at which the frozen proxy is loaded. These are contained in
our fork and scripts, which will be released upon acceptance.

\end{document}